\newtheorem{theorem}{Theorem}
\newtheorem{definition}{Definition}[section]
\newtheorem{lemma}[theorem]{Lemma}
\newtheorem{corollary}[theorem]{Corollary}
\newcommand{\NP}{\textit{NP}\xspace}
\newcommand{\coNP}{\textit{coNP}\xspace}
\newcommand{\PP}{\textit{P}\xspace}
\newcommand{\supplementary}{Appendix}
\icmltitlerunning{It's Not What Machines Can Learn, It's What We Cannot Teach}
\newif\ifhardlanguage
\newif\iflongcqc
\begin{document}

\twocolumn[
\icmltitle{It's Not What Machines Can Learn, It's What We Cannot Teach}



\icmlsetsymbol{equal}{*}

\begin{icmlauthorlist}
\icmlauthor{Gal Yehuda}{tech}
\icmlauthor{Moshe Gabel}{to}
\icmlauthor{Assaf Schuster}{tech}
\end{icmlauthorlist}

\icmlaffiliation{to}{Department of Computer Science, University of Toronto, Canada} 
\icmlaffiliation{tech}{Department of Computer Science, Technion -- Israel Institute of Technology, Israel}

\icmlcorrespondingauthor{Gal Yehuda}{ygal@cs.technion.ac.il}
\icmlcorrespondingauthor{Moshe Gabel}{mgabel@cs.toronto.edu}

\icmlkeywords{Machine Learning, ICML}

\vskip 0.3in
]



\printAffiliationsAndNotice{}  

\begin{abstract}
Can deep neural networks learn to solve any task, and in particular problems of high complexity?
This question attracts a lot of interest, with recent works tackling computationally hard tasks such as the traveling salesman problem and satisfiability.
In this work we offer a different perspective on this question.
Given the common assumption that $\NP \neq \coNP$ we prove that any polynomial-time sample generator for an \NP-hard problem samples, in fact, from an easier sub-problem.
We empirically explore a case study, Conjunctive Query Containment, and show how common data generation techniques generate biased datasets that lead practitioners to over-estimate model accuracy.
Our results suggest that machine learning approaches that require training on a dense uniform sampling from the target distribution cannot be used to solve computationally hard problems, the reason being the difficulty of generating sufficiently large and unbiased training sets.

\end{abstract}

\section{Introduction}
Applying deep learning methods to solve or approximately solve\footnotemark computationally hard problems has gained popularity in recent years.
Examples include attempts to solve the satisfiability problem \cite{selsam2018learning,DBLP:conf/aaai/CameronCHL20}, the traveling salesman problem (TSP) \cite{prates2019learning, milan2017data} and symbolic integration \cite{lample2019deep}.
There has also been recent interest in developing dedicated architectures for learning how to perform algorithmic tasks from solved instances, such as the Neural Turing Machine \cite{DBLP:journals/corr/GravesWD14}, the Differentiable Neural Computer \cite {graves2016hybrid}, and the Neural GPU \cite{kaiser2015neural}. 

\footnotetext{Throughout the paper, by ``solve'' we mean solve or solve approximately, e.g., by allowing some level of error.}

The \emph{expressive power} of deep neural networks, which represents the breadth of functions deep models are able to compute, has been an active area of research since the rise of deep learning \cite{siegelmann1991turing, raghu2017expressive,lu2017expressive,Xu2020What}.  
We know that recurrent neural networks and many modern architectures are Turing complete \cite{prez2019turing} when allowed unbounded precision, meaning they are in theory capable of performing any computation that a Turing machine can do. 
This raises the intriguing possibility of discovering efficient approximate solvers by using machine learning to train a model on solved instances of a given problem. 
However, even if a model is expressive enough in theory, we must also be able to train it to arrive at the correct solution.

\begin{figure}
    \centering
    \centerline{\includegraphics{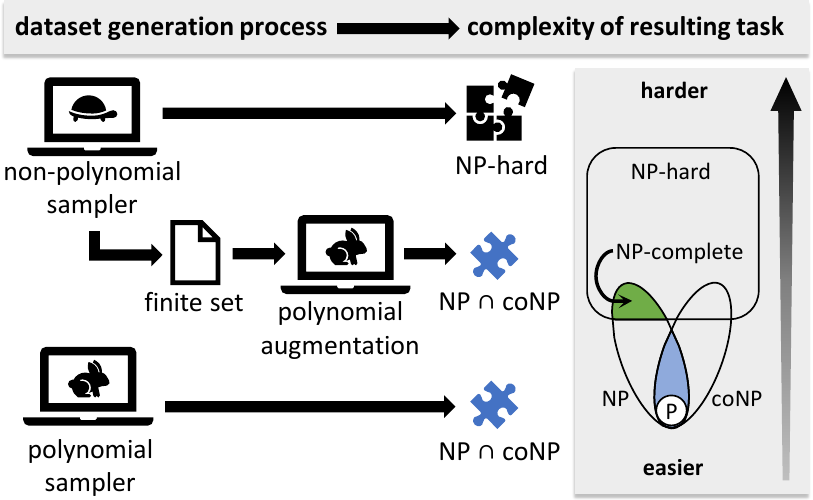}}
    \caption{Assuming $\NP \neq \coNP$, any polynomial-time data generator for a \NP-hard classification task will output data from an easier $\NP \cap \coNP$ task, even when starting with a sample generated by a non-polynomial deterministic process and augmenting it.
    }
    \label{fig:summary}
\end{figure}

The difficulty emerges in acquiring a suitable dataset. 
Large, diverse and densely sampled datasets are essential for the learning ability of deep learning models \cite{10.5555/3203489}.
Existing datasets for computational tasks tend to be application specific; 
such datasets may also be biased towards a subset of the problem space, which may be easy and unrepresentative.
For example, training a model to answer the 3-SAT problem using a dataset where all examples follow a simple pattern may yield high accuracy for similar data without capturing the full essence and difficulty of the problem in the resulting model. 
A trivial example of such a pattern is when all the positive instances are shorter than the negative instances. 
A more subtle case is when all samples are instances of an easy problem that is hard to identify at first glance yet for which an efficient solution is known, such as 2-colorability.
In both cases, solving the problem on these datasets does not mean solving the broader problem.
Since the performance of such models is measured empirically, a biased, possibly easy dataset may lead us to falsely believe the models are solving the general problem.

For abstract computational tasks such as 3-SAT and TSP, a popular alternative to using existing datasets is generating solved instances~\cite{selsam2018learning,prates2019learning}.
Such dataset generators can generate as many samples as we wish, which is particularly appealing when training models that require large training sets. 
Moreover, performance evaluation can be  more precise since we can generate as many samples as we need to reduce the generalization gap. 

Dataset generators, however, are not without issues.
Labeling datasets for \NP-hard classification tasks requires deterministic solvers whose runtime grows exponentially or worse with the problem size~\cite{kovacs2013first}, which is impractical for the large training sets needed by popular ML approaches.
Instead, practitioners turn to alternative approaches that run in polynomial (often linear) time.
One common approach is starting with a random example and carefully applying transformations so that the label is known by construction~\cite{lample2019deep}.
Another approach is data augmentation: start with a seed set of deterministically-labeled samples and apply class-preserving rewrites~\cite{selsam2018learning}.
It is not uncommon for test sets to be generated using the same procedure.

\paragraph{Our Contributions}
We show that polynomial-time dataset generators cannot be used to train models in solving \NP-hard problems.
If a classification task is an \NP-hard decision problem, any \emph{efficient} (polynomial time) procedure generates biased, unrepresentative data sets of solved instances, unless $\NP = \coNP$. 
In other words, when starting with an \NP-hard  problem, the data sampling procedure leaves us with an easier problem that we train the model on.
Figure~\ref{fig:summary} illustrates this result.
\ifhardlanguage
Finally, we show an example of the worst case scenario: an \NP-hard language for which any polynomial-time dataset generator creates a trivial classification task.
\fi

Specifically, under the commonly accepted assumption that $\NP \neq \coNP$ we prove the following:
\begin{enumerate}[noitemsep, nosep]

\item No polynomial-time data generation procedure can ever sample from the full problem space.

\item The classification task that a polynomial-time data generator can sample from is an $\NP \,\cap\, \coNP$ decision problem, strictly easier than the original problem.

\ifhardlanguage
\iffalse
\item There is an \NP-hard language for which samples generated by any polynomial-time procedure can, with high probability, be classified using superficial features.
\else
\item There is a language that is \NP-hard to decide, yet any polynomial-time procedure that generates samples from it creates samples that can with high probability be classified using a superficial feature.
\fi
\fi
\end{enumerate}

As a case study, we consider the  \NP-complete problem of Conjunctive Query Containment, or CQC~\cite{chandra1977optimal,Chirkova2018}.
We use a data augmentation approach to quickly generate large training sets of solved CQC instances, and train a neural network model to solve it.
We demonstrate how training on the generated dataset is not enough for solving the original CQC task, and that using the same procedure to generate the test set can lead us to overestimate model performance.

In summary, we show a kind of ``Catch-22'' for \NP-hard problems: even if we had the right model architecture and training algorithm, we cannot feasibly obtain the data required in order to train them.
Though a trained model may appear to solve the task on an efficiently generated dataset, it does not mean the trained model has learned to solve the original task.

\section{Case Study: Learning an \NP-hard Problem} \label{CQC}

In this section we demonstrate how common and seemingly reasonable data generation approaches can cause us to overestimate model performance. 
We describe a representative case study: modeling, training, and evaluating a solver for the Conjunctive Query Containment (CQC) problem.
CQC is a central problem in the theory of databases \cite{Chirkova2018}, motivated by both practical and theoretical interests, with applications in query minimization and optimization \cite{jarke1984query}, verifying data integrity \cite{fernandez1999verifying}, cache management \cite{draper2001nimble} and querying incomplete databases \cite{imielinski1988incomplete}.

\subsection{Problem Definition} 
The problem of query containment is to decide, given two database queries $p, q$, if for every database $D$ the results of $p$ on $D$ are contained in the results of $q$ on $D$.
For clarity, we focus on a simpler yet \NP-complete version of this problem, with up to 2 relations and no projections.

A \emph{database} $D = \{R_1,...,R_k\}$ is a collection of tables, where each table $R_i$ is collection of rows (tuples) of length 3.
A \emph{conjunctive query} $q$ over the database $D$ is a first order predicate of the form 
\begin{align*}
\label{eq:predicate}
    \exists x_1,...,x_n : R_{i_1}(\ell_1, \ell_2, \ell_3)\wedge \cdots \wedge 
                            R_{i_s}(\ell_{3s-2}, \ell_{3s-1}, \ell_{3s})
\end{align*}
Where $x_1,...,x_n$ are variables, and $\ell_j$ is either a variable (some $x_u$) or a constant $c_w$.   
We assume that all variables and constants take value from a finite set $\Sigma$.
Given a conjunctive query $q$, we denote by $vars(q)$ the set of all variables in $q$.
\iflongcqc
An example of a query with 4 conjunctions is:
\begin{align*}
q    : \; & R_1(x_4, x_3, x_1)  \, \wedge \\
                              & R_1(x_1, x_2, x_1)  \, \wedge \\
                              & R_2 (x_2, x_3, x_3) \, \wedge \\
                              & R_2 (x_5, x_1, x_2)
\end{align*}
\else
For example, the following is a query with 3 conjunctions:
\begin{align*}
q : R_1(x_4, x_3, x_1)  \wedge  R_1(x_1, x_2, x_1)  \wedge R_2 (x_5, x_1, x_2)
\end{align*}
\fi
A tuple $(c_1,...,c_n)$ \emph{satisfies} the query $q$ for database $D$ if when assigning $c_i$ to $x_i$ the predicate is true.
The \emph{evaluation} of a query $q$ on a database $D$, denoted by $q(D)$, is the collection of all tuples which satisfy $q$.

Conjunctive Query Containment (CQC) is the set of all pairs $(p, q)$ of conjunctive queries such that $p(D) \subseteq q(D)$ for every database $D$; we denote such pairs by $p \subseteq q$.
Deciding whether a query pair $(p,q)$ is in CQC is \NP-complete~\cite{chandra1977optimal}.

\subsection{An RNN Model for CQC} \label{model_architecture}

Exact containment is \NP-complete, so instead we aim to give an approximation using supervised learning: we will train a model to discriminate between query pairs.
Given two queries $q$ and $p$ as a sequence of tokens, it will output 1 if $q \subseteq p$ or 0 if $q \not \subseteq p$.

\iflongcqc
\iflongcqc
\begin{table}
\else
\begin{table}[h]
\fi
    \caption{Token representation. Each token with index $j$ is mapped to a vector with $1$ in position $j$ and all other elements are zero. The dictionary size and the length of the vectors is $d=42$.
    }
    \label{tab:model:representation}
    \vskip 0.1in
    \begin{center}
    \begin{small}
    \begin{tabular}{lcl}
        \toprule
        Type & Tokens & Index range \\
        \midrule
        Variables & \texttt{x0} ~ \dots ~ \texttt{x32} & 6--11, 14-40 \\
        Relations & \texttt{Q} ~ \texttt{R0} ~ \texttt{R1}  & 12, 5, 4 \\
        Operators & \texttt{$\wedge$} ~ \texttt{:} & 1, 13 \\
        Parentheses & \texttt{(} ~ \texttt{)} & 2, 3 \\
        Constants & \texttt{0} ~ \texttt{1} &  41, 42 \\
        \bottomrule
    \end{tabular}
    \end{small}
    \end{center}
    \vskip -0.1in
\end{table} 

\fi

\paragraph{Input Encoding}
Given a pair of conjunctive queries $(p,q)$ and a binary label, we tokenize each query and represent it as a fixed length sequence of one-hot vectors with 42 dimensions (the number of tokens in our dictionary). 
The sequence length is $95$, since this is the longest possible query with our parameters.
We pad shorter queries with zero vectors.
\iflongcqc
Table \ref{tab:model:representation} shows the mapping between the query tokens and their representation.
\else
The full table of token encodings is available in the \supplementary{}.
\fi

\paragraph{Model Architecture}
Since we aim to map sequences (query pairs) to scalars, we choose to use Recurrent Neural Networks with Long Short-Term Memory (LSTM) units\footnotemark, which excel at such tasks and have been shown to be computationally expressive~\cite{prez2019turing, weiss-etal-2018-practical}.
\footnotetext{
We emphasize that our main results in Section~\ref{bias} do not depend on any particular modeling choice, and apply equally to all approaches that require dense sampling.
Nevertheless, we have also explored alternatives including Transformers and learned embeddings, with no meaningful difference in empirical performance or generalization. We discuss hybrid architectures in Section \ref{discussion}.}

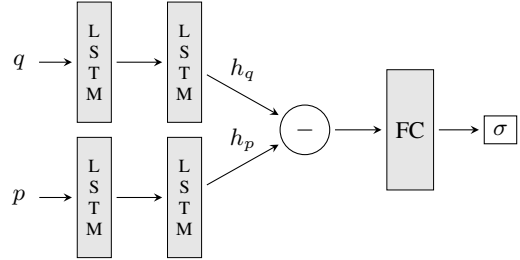
\begin{figure}
    \centering
    \begin{tikzpicture}[every node/.style={outer sep=2, font=\footnotesize}]
\usetikzlibrary{decorations.pathreplacing}

\tikzstyle{unit} = [draw, circle, minimum size=14, inner sep=2]
\tikzstyle{arrow} = [-stealth]
\tikzstyle{layer} = [draw, very thin, fill=black!10, minimum height=1.6cm, minimum width=0.4cm, align=center]
\tikzstyle{lstm} = [layer,font=\scriptsize]

\node [circle,draw, outer sep=2] (minus) at (-0.4,3.7) {$-$};
\node [layer] (fc) at (1,3.7) {FC};
\node [draw] (sigma) at (2.2,3.7) {$\sigma$ };

\node [lstm] (qlstm2) at (-2,4.6) {L\\S\\T\\M};
\node [lstm] (qlstm1) at (-3.2,4.6) {L\\S\\T\\M};
\node [lstm] (plstm2) at (-2,2.8) {L\\S\\T\\M};
\node [lstm] (plstm1) at (-3.2,2.8) {L\\S\\T\\M};
\node [shift={(-1,0)}] (q) at (qlstm1) {$q$};
\node [shift={(-1,0)}] (p) at (plstm1) {$p$};

\draw [arrow] (qlstm1) -- (qlstm2);
\draw [arrow] (plstm1) -- (plstm2);
\draw [arrow] (q) -- (qlstm1);
\draw [arrow] (p) -- (plstm1);

\draw [arrow] (qlstm2) edge node [midway, above] {$h_q$} (minus);
\draw [arrow] (plstm2) edge node [midway, above] {$h_p$} (minus);

\draw [arrow] (minus) -- (fc);
\draw [arrow] (fc) -- (sigma);

\end{tikzpicture}
    \caption{Model architecture}
    \label{fig:model:architecture}
\end{figure}

Figure~\ref{fig:model:architecture} shows the network architecture for the model.
We encode each query into a $w$-dimensional vector using LSTM layers with ReLU activations: two layers for $p$ and two layers for $q$.
The length of each layer is $n$, and the internal dimension (width) of the LSTM units is $w$.
The final LSTM state vectors $h_p$ and $h_q$ are then subtracted from each other, resulting in the $w$-dimensional vector $v = h_q - h_p$.
Finally, The vector $v$ is fed to a fully connected layer that reduces $v$ to a single scalar (i.e., a dot product), followed by a sigmoid activation function $\operatorname{\sigma}(x)$ to normalize the output to the range $[0, 1]$.
When $p \subseteq q$ the label will be 1, or 0 otherwise.

\subsection{Data Generation}
\label{sec:data-generation}

Simply generating random query pairs and labeling them using a determinstic CQC solver is not feasible, given the number of pairs we need and the large size of each query, and may also result in an unbalanced training set.

One common approach is to generate one input from the pair and work forwards or backwards to the other input by applying a sequence of rewrites that guarantee the pair's class~\cite{lample2019deep}.
However this approach risks introducing superficial features and biasing the data towards unrealistic examples~\cite{davis2019use}.

Instead, we aim to sample query pairs directly.
We address class imbalance by sampling $(p,q)$ from a special distribution $\mu$ such that $\Pr\left[ p \subseteq q \right] \approx 0.5$, yet both positive and negative instances have the same structure (size, number of variables, etc.).
We first generate a small ``seed'' set of query pairs by sampling from $\mu$ and labeling them using a deterministic theorem prover. 
We then use \emph{data augmentation} to generate large training sets -- a common approach for this problem~\cite{selsam2018learning}.

\paragraph{Generating Balanced Dataset} \label{mu}
When drawing samples from parametrized distribution, 
many \NP-complete languages such as 3-SAT and TSP exhibits a \emph{phase transition} phenomenon:
the likelihood of a random sample drawn from a special parametric distribution to be in the language is determined by where the distribution's parameter $\alpha$ is in relation to constant $c$  \cite{gent1994sat,zhang2004phase, prates2019learning}.
We exploit a similar phenomena in CQC to draw balanced samples.

\iflongcqc
Intuitively, for a conjunctive query $p$ with a fixed number of conjunctions, the fewer variables is uses, the more ``constrained'' it is. 
For example, let $p(x_1) = R_1(x_1, x_2, x_3)$ and $q(x_1) = R_1(x_1, x_1, x_2)$.
While every tuple in $R_1$ will satisfy $p$, only tuples whose first and second element are the same will satisfy $q$.

Given a fixed set of relations $R$, we define the distribution $G(X, m)$ over conjunctive queries with $m$ conjunctions, where $X$ is a set of variables as follows:
first, choose $m$ relations from $R$ uniformly and with repetitions;
then, conjunction variables for each conjunction uniformly and with repetitions from $X$. 
The \emph{constraintness} of $G(X,m)$ is defined as $\alpha = \frac{m}{n}$.

Let $p \sim G(X_1, m_1)$ and $q \sim G(X_2, m_2)$ be a query pair, and let $\alpha_1$ and $\alpha_2$ be the respective constraintness.
We observe that the probability of $p \subseteq q$ depends on the ratio of $\alpha_2$ and $\alpha_1$. 
When $\frac{\alpha_2}{\alpha_1} \gg c$ for a constant $c$, with high probability $p \subseteq q$, when $\frac{\alpha_2}{\alpha_1} \ll c$ with high probability $p \not\subseteq q$, and when $\frac{\alpha_2}{\alpha_1} \approx c$, the probability of $p \subseteq q$ is approximately $0.5$.
We empirically determined that for $m_1 \geq m_2$,  $c \approx \frac{2}{15}$. 

Finally, we define the distribution $\mu(m_1,m_2)$ over pairs of conjunctive queries $(p,q)$ as sampling $p \sim G(X_1, m_1)$ and $q \sim G(X_2, m_2)$ with $X_1$ and $X_2$ such that $\frac{\alpha_2}{\alpha_1} \approx c$. 
Since positive and negative samples are generated with the same structure and the same constraintness, syntactic features alone are unlikely to help classification.

\else
We define a parametric family of query pairs $\mu(m_1,m_2)$ such that sampling $(p,q)$ from $\mu(m_1,m_2)$ with $m_1 \geq m_2$ guarantees the following properties.
    First,
    $p$ has $m_1$ conjunctions and $q$ has $m_2$ conjunctions.
    Second, the probability that $p \subseteq q$ is approximately 0.5.
    Finally, the process for generating positive and negative examples is the same.
The definition and details of $\mu(m_1,m_2)$ are available in the \supplementary{}.
\fi

We generate instances of $(p,q)$, both positive and negative, where the number of conjunctions in $p$ is 1--10, and the number of conjunctions in $q$ is 1--8. 
We first choose $m_1 \sim U(1,8)$ and $m_2 \sim U(1,\min\{m_1,8\})$,
and then sample $(p,q)\sim \mu(m_1,m_2)$.
For each conjunction we choose a relation at random from
$R=\{R_1,R_2\}$, with 3 variables or constants sampled uniformly with repetition from the set $\{x_0, \dots, x_{32},0,1\}$.
Using $R=\{R_1,R_2\}$ is sufficient to make the problem \NP-complete. 
%
We use the Vampire theorem prover \cite{kovacs2013first} to obtain the label for each sample.

\paragraph{Data Augmentation}
Though the time complexity of sampling from $\mu$ is linear, generating large training sets this way is infeasible since the deterministic theorem prover runs in exponential time in the worst case.

Instead, we augment every labeled sample in the seed set to create $99$ additional samples with the same label.
Starting with the original sample $(p,q)$, we apply a sequence of up to 3 randomly selected class-preserving rewrites, yielding a new pair $(p',q')$ with the same label.
We repeat the process 98 more times, each time starting from the last $(p',q')$.
Since the original seed set was balanced, this results in a dataset $\times 100$ larger with roughly half positive and half negative instances. 
Data augmentation runs in linear time.

\iflongcqc
Given a query $q$, we define the following rewrites:
\begin{itemize}[nosep, noitemsep]
    \item \texttt{MergeVar($q$)}: Pick two variables $x,y \in vars(q)$, replace every occurrence of $y$ by $x$.
    \item \texttt{SplitVar($q$)}: Pick a new variable $w \not\in vars(q)$, and a variable $x \in vars(q)$. Each occurrence of $x$ is unchanged with probability $0.5$ or replaced with $w$.
    \item \texttt{AddConj($q$)}: Pick a conjunction $R(\ell_1, \ell_2,\ell_3)$ and add it to $q$.
    \item \texttt{DelConj($q$)}: Pick a conjunction in $p$ and remove it. 
    \item \texttt{Shuffle($q$)}: Shuffle the order of conjunctions in $p$.
\end{itemize}

For $(p,q)$ where $p \subseteq q$, we use the following set of class-preserving rewrites:
$(\texttt{MergeVar}(p), q)$, 
$(p, \texttt{SplitVar}(q))$, 
$(\texttt{AddConj}(p), q)$, 
$(p, \texttt{DelConj}(q))$,
$(\texttt{Shuffle}(p), q)$, and $(p, \texttt{Shuffle}(q))$. 
For $(p,q)$ where $p \not\subseteq q$, we use the following class-preserving rewrites: 
$(p, \texttt{MergeVar}(q))$, 
$(\texttt{SplitVar}(p), q)$, 
$(p, \texttt{AddConj}(q))$,
$(\texttt{Shuffle}(p), q)$, and $(p, \texttt{Shuffle}(q))$.
\else
An example of a class preserving rewrite is variable merging: if $p \subseteq q$, then merging two variables in $p$ to a single variable will preserve the containment.
The full list of all class-preserving rewrites for $p \subseteq q$ and $p \not\subseteq q$ is available in the \supplementary{}.
\fi

\subsection{Experimental Results}

\begin{figure}
  \centering
  \centerline{\includegraphics[width=\columnwidth]{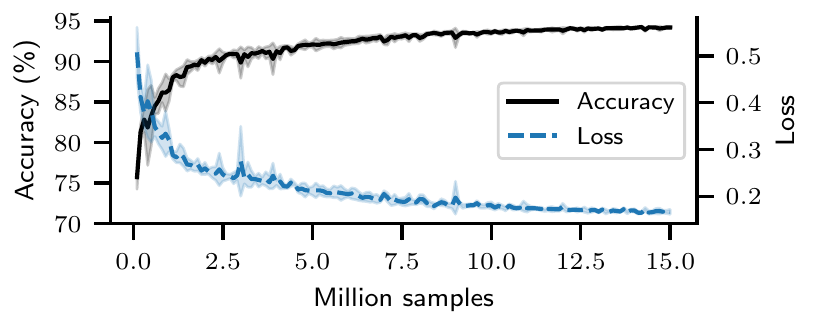}}
  \caption{Average model accuracy and cross entropy loss throughout the learning process for the unseen \textsc{aug} test set comprised of 500K queries. Lines show average of 5 models, bands show standard deviations. }
  \label{plot:model1_curve}
\end{figure}

We trained 5 models using the Adam optimizer \cite{kingma2014adam}, with binary cross entropy loss.
We set the dimensionality of the LSTM output space to $w=256$, and learning rate was set to 0.00105 by tuning on a separate validation set. 
Adam's hyperparameter $\beta_1$  was set to $0.9$ and $\beta_2$ was set to $0.999$. 
We train each model for 150 steps: in each step we generate 100K query pairs and train with mini-batch size of 500. We used a 3.3GHz Intel i9-7900X machine with two Nvidia GeForce GTX 1080 Ti GPUs.

Figure~\ref{plot:model1_curve} shows average performance during training, measured on the \textsc{aug} test set: a balanced test set of 500K instances generated 
by applying the data augmentation procedure to a new seed set (Section~\ref{sec:data-generation}).
The average final accuracy after 15 million samples is $94.2\%$ (SD 0.6\%).

\paragraph{Generalization}

While the model appears to perform very well on the unseen test set, we were suspicious.
Is it really possible that such a straightforward model results in such high accuracy?

\begin{figure}
  \centering
  \centerline{\includegraphics[width=\columnwidth]{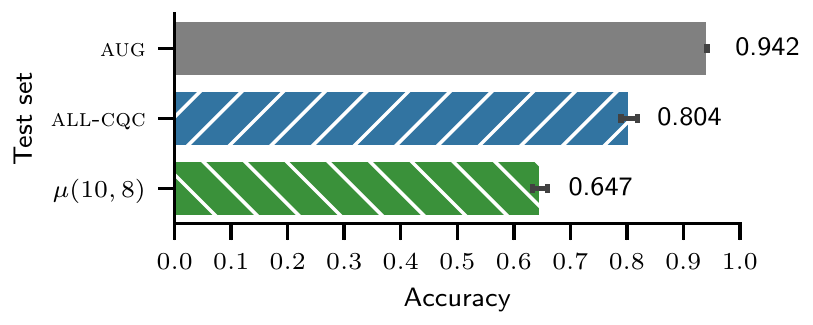}}
  \caption{Average final accuracy for different test sets. Error bars show standard deviation. The high performance on the test set generated by data augmentation method does not translate to high performance on the other test sets. }
  \label{plot:model1_train-test}
\end{figure}

To test generalization, we generated two additional test sets.
The first one, denoted \textsc{all-cqc}, is the set of all 537,477,120 conjunctive query pairs with $2$ conjunctions in $p$ and $2$ conjunctions in $q$, labeled by a deterministic solver.
The second dataset, denoted $\mu(10, 8)$, contains 250K queries sampled from $\mu$, where $p$ had $10$ conjunctions and $q$ had $8$ conjunctions, again labeled by the solver.

Figure~\ref{plot:model1_train-test} shows the average accuracy of the trained model on the two new test sets, as well as the original test set based on mutating pairs of conjunctive queries.
The high accuracy obtained on the original test set is not preserved when testing it on the entire space.
Additionally, it is worth noting that \textsc{all-cqc} is unbalanced: 
classifying everything as 0 would result in accuracy above 90\%.
Performance on the balanced 
$\mu(10,8)$ dataset is even lower, even though its class balance matches that of the training set.

\subsection{Discussion}
What went wrong?
Clearly the model has learned something: it performs very well on an unseen test set created by our data generator.
This suggests the issue is not improper learning schedule or a poor choice of model. 
Instead, the model did not learn how to solve CQC but rather how to exploit a property of the generation method.
Moreover, by generating the test set using the same procedure, we overestimate performance on the full problem space. 
Had we not tested on \textsc{all-cqc} and $\mu(10,8)$, we might have remained convinced that the model has learned to solve CQC.

In the next section, we show that the issue indeed lies with data generation, and that it would be difficult to overcome for any modeling approach that requires large training sets.
Any polynomial-time data generation method for an \NP-hard problem results in an easier sub-problem.

\section{Inherent Bias in Efficient Samplers} \label{bias}

Supervised learning requires obtaining a training set: instances of the problem with known labels.
When the training set is biased or the label leaks via superficial features, such as sample length or range of attributes, the resulting model may be of no value.

In this section we show that any polynomial-time data generation method for an \NP-hard decision problem is not just inherently biased, it is also biased in a way that precludes training a model to solve the original problem.

We study the data generation problem for binary classification tasks under the assumption that $\NP \neq \coNP$, and show that for an \NP-hard decision problem, no efficient method can produce every possible labeled instance.
Moreover, the dataset generated by any efficient data generation method for \NP-hard problems will provably generate data from an easier sub-problem of the original classification task.
Hence, learning to (approximately) solve the sampled sub-problem does not guarantee learning to solve the original problem, since there are two different classification problems.
\ifhardlanguage
In addition, we show an example of a problem for which any efficient data generation procedure only generates data that is trivially solvable with arbitrarily high probability, whereas the original problem cannot be solved in polynomial time.
\fi

\subsection{Complete Efficient Samplers Do Not Exist}
We first discuss desirable properties for data generation methods for classification tasks, and show that under the assumption that $\NP \neq \coNP$, it is impossible to obtain both efficient and representative generators for \NP-hard problems. 

A \textit{language} $L$ is a set of strings.  
Every language $L$ induces a binary classification task: the positive class contains all the strings in $L$, and the negative class contains all the strings in $L$'s complement $L^C$.
For example, $L = \text{CQC}$ is the set of all strings $w$ such that $w = (p,q)$ for two conjunctive queries $p,q$ and $p \subseteq q$. 
The classification task induced by $L$ is to decide, given a string $w$, whether $w$ is in $L$.

A \emph{sampler} $S_L$ for the classification problem induced by the language $L$ is a randomized algorithm (an algorithm that can flip coins during its execution) which generates labeled samples from both the negative and the positive classes.
Our goal is to obtain a dataset as representative as possible in the following sense: achieving high accuracy on the sampled instances should indicate high accuracy on the entire space of instances.

Hence, a reasonable property of a representative sampler is \emph{completeness}: the ability to generate every instance with a non-zero probability (otherwise a classifier trained using the sampler may have low accuracy on the unsampled parts of the problem space).
In addition, since modern machine learning methods such as deep neural networks require large datasets, the sampler is used to generate millions of labeled instances. Thus, a sampler should be \emph{efficient}, which we define as polynomial run time complexity\footnote{In reality, this is hardly sufficient for real use, but as we see even this permissive requirement is too demanding for samplers.}. 

The first question we address is the existence of such efficient and complete samplers for \NP-hard problems.
Alas, such samplers do not exist: under the plausible assumption that $\NP \neq \coNP$, we will now show that it is impossible to obtain efficient and complete data samplers for \NP-hard languages.
Without loss of generality and for technical convenience, we separate our discussion between a \emph{positive} and a \emph{negative} sampler.

\begin{definition}[Positive Sampler]
\label{def:positive-sampler}
A positive sampler $S^{+}_L$ for the language $L$, is a randomized algorithm which on input $n$ (represented in unary), outputs a string $w$ such that $|w| = n$ and $w \in L$, or outputs $\emptyset$ if no such string exists. 
\end{definition}


\begin{definition}[Negative Sampler] A negative sampler $S^{-}_L$ for a language $L$ is a positive sampler for $L^C$: on input $n$  $S^{-}_L$ outputs a string $w$ such that $|w| = n$ and $w \not\in L$, or outputs $\emptyset$ if no such string exists.
\end{definition}

Optionally, a sampler (negative or positive) can also output the string of random bits drawn by the sampler when generating $w$.
For a language $L$ with both a positive and a negative samplers, we define a \emph{sampler} for $L$.

\begin{definition}[Sampler]
\label{def:sampler}
A sampler $S_L$ for a language $L$ is a randomized algorithm such that on input $n$ (represented in unary) it samples a word $w$ using either $S_L^{+}$ or $S_L^{-}$ and returns $w$ and the corresponding label 1 or 0.
\end{definition}

Note that this definition matches any data generation algorithm for $L$, regardless of method, since we do not limit how $S_L$ chooses which sampler to use. It can even run both $S_L^+$ and $S_L^-$, and only then choose which word to output.
Moreover, our definition of sampler implies generating both a sample and its correct label.
Being able to sample from a space does not necessarily imply knowing the label of the result. 
For example, we can easily generate a random Boolean 3-CNF formula, without knowing whether it is satisfiable. 
However, for supervised learning, we would still have to label it using a deterministic solver.
Thus, sampling from the space of 3-CNF formulas is not the same as sampling from the space of 3-SAT formulas, where the label is known. 
The former does not match our definition for a sampler (Definition~\ref{def:sampler}, while the latter is a positive sampler (Definition~\ref{def:positive-sampler}).

We denote by $S_L(n)$ the set of strings of length $n$ that can be generated by the sampler $S_L$. 
A sampler is \emph{complete} if it can generate every example: for every sufficiently large $n$, for every $w \in L$ of length $n$ it holds that $w \in S_L(n)$.
A sampler $S_L$ is called \emph{efficient} if it runs in polynomial time. 
For clarity, if $w \in S_L(|w|)$, in other words if $w$ can be generated by $S_L$, we denote it by $w \in S_L$.

The notion of complete efficient sampler is related to the notion of Nondeterministic Test Instance Construction Method (NTICM), as defined by \citet{sanchis1990complexity}.
The NTICM for a language $L$ is a nondeterministic Turing machine $M$ such that on input $1^n$ outputs a string from $L$, and that for every string in $L$ there is a computational path of $M$ which outputs it.
As proven in \cite{sanchis1990complexity}, NTICMs for \coNP-complete languages do not exist unless $\NP = \coNP$. 

We now show that the existence of efficient complete sampler implies the existence of NTICM.
It follows that efficient complete negative samplers for \NP-complete languages do not exist, hence efficient complete samplers do not exist.
For clarity, we first prove this result for \NP-complete problems. 
In Section~\ref{sec:sampler-bias} we extend it to all \NP-hard languages.

\begin{theorem}\label{thrm:non_existence} 
If $L$ is \NP-complete, then there is no efficient complete sampler for it, unless \NP = \coNP.
\end{theorem}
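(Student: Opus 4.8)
The plan is to reduce the statement to the non-existence of Nondeterministic Test Instance Construction Methods (NTICMs) for \coNP-complete languages, which the excerpt cites from \citet{sanchis1990complexity}. Suppose, toward a contradiction, that $L$ is \NP-complete and admits an efficient complete sampler $S_L$. By definition, $S_L$ is built from a positive sampler $S_L^{+}$ for $L$ together with a negative sampler $S_L^{-}$, where $S_L^{-}$ is precisely a positive sampler for the complement language $L^C$. Completeness of $S_L$ forces $S_L^{-}$ to be complete for $L^C$: for every sufficiently large $n$, each string $w \in L^C$ with $|w| = n$ lies in $S_L^{-}(n)$. Efficiency of $S_L$ makes $S_L^{-}$ run in polynomial time. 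Finally, since $L$ is \NP-complete, $L^C$ is \coNP-complete. Note that we must route through the negative sampler rather than the positive one: Sanchis's obstruction concerns \coNP-complete languages, and it is $L^C$, not $L$, that is \coNP-complete. So it suffices to show that an efficient complete positive sampler for a language (here applied to $L^C$) yields an NTICM for that language, and then invoke Sanchis's result.

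The core step is this simulation. Given the randomized polynomial-time algorithm $S_L^{-}$, I would build a nondeterministic Turing machine $M$ that, on input $1^n$, nondeterministically guesses a string $r$ of coin flips of the appropriate polynomial length, runs $S_L^{-}$ on input $1^n$ using $r$ as its randomness, and outputs whatever $S_L^{-}$ outputs (producing no instance on that branch if $S_L^{-}$ returns $\emptyset$). Since $S_L^{-}$ runs in polynomial time, so does $M$; and since $S_L^{-}$ is a correct negative sampler, every non-empty output of $M$ is a string in $L^C$ of length $n$, so $M$ never produces an illegal instance. Conversely, completeness of $S_L^{-}$ means that for every large $n$ and every $w \in L^C$ with $|w| = n$ there is a coin sequence $r$ on which $S_L^{-}$ outputs $w$; the corresponding nondeterministic branch of $M$ outputs $w$. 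Hence $M$ is an NTICM for $L^C$. The loose ends — that completeness is promised merely for ``sufficiently large'' $n$, whereas an NTICM must handle all $n$, and that strings of every length should exist in $L^C$ — are handled by hard-coding a finite table for the small lengths and, if necessary, restricting attention to the (infinitely many) lengths at which $L^C$ is nonempty; neither affects the asymptotic reduction.

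With an NTICM for the \coNP-complete language $L^C$ in hand, \citet{sanchis1990complexity} yields $\NP = \coNP$, contradicting the standing assumption. Therefore no efficient complete sampler for $L$ can exist unless $\NP = \coNP$, as claimed; Section~\ref{sec:sampler-bias} should then generalize the same argument from \NP-complete to arbitrary \NP-hard languages.

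I expect the main obstacle to be one of care rather than depth: making the nondeterministic-simulation argument fully rigorous (coin-length bounds, the $\emptyset$-output branch, and the ``sufficiently large $n$'' versus ``all $n$'' mismatch between the definitions of a complete sampler and an NTICM), and, just as importantly, correctly transferring the completeness hypothesis on the \emph{whole} sampler $S_L$ to its negative component $S_L^{-}$ — that is, arguing that a complete sampler must be able to emit every negative instance, not merely every positive one.
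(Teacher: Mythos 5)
Your proposal is correct and follows essentially the same route as the paper's proof: extract the negative sampler $S^{-}_L$, replace its coin flips by nondeterministic choices to obtain an NTICM for the \coNP-complete language $L^C$, and invoke the result of \citet{sanchis1990complexity} to conclude $\NP = \coNP$. Your additional care about coin-length bounds, the $\emptyset$ branch, and the ``sufficiently large $n$'' issue only makes explicit details the paper leaves implicit.
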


\begin{proof}
Assume by contradiction that there exists an efficient complete sampler $S_L$ for an \NP-complete language $L$.
Denote by $S^{-}_L$ the negative sampler used by $S_L$.
Define the following nondeterministic Turing machine $M$:
$M$ runs $S^{-}_L$ on $1^n$, and each time $S^{-}_L$ flips a coin, $M$ decides nondeterministicly to which branch of $S^{-}_L$ to proceed. 
$M$ is a NTICM for $L^C$, which is \coNP-complete since we assumed $L$ is \NP-complete, in contradiction to Proposition 2.1 in \cite{sanchis1990complexity}.
\end{proof}

Theorem \ref{thrm:non_existence} shows that it is impossible to obtain an efficient complete sampler for both the negative and the positive classes of an \NP-complete language $L$. 
We note that even the existence of efficient complete \emph{positive} samplers for all languages in \NP is an open problem:
the existence of a language $L \in \NP$ with no efficient complete positive sampler would imply that $\PP \neq \NP$~\cite{sanchis1990complexity}.

We next show that efficient data samplers for \NP-hard languages are biased towards an easier subset. 

\subsection{Incomplete Efficient Samplers are Biased}
\label{sec:sampler-bias}

We now show that not only an efficient sampler cannot generate the entire space of labeled instances for an \NP-hard problem, but also the instances it does generate are \emph{easier} to decide than the original problem.

\begin{definition}
The classification task induced by $S_L$, denoted by $C(S_L)$, is the task of classifying instances generated by $S_L$: given an instance $w$ generated by $S_L$ (without its label), determine if $w \in L$. 
\end{definition}

$C(S_L)$ may be easier than the original decision problem.
For example, let $L$ be the MAX-CLIQUE problem: given a graph $G$ and a number $k$, does $G$ has a clique of size $k$? 
Consider a sampler $S_L$ that generates instances of $(G,k)$ with matching labels, but can only generate planar graphs. 
In this case $C(S_L)$ would be the problem of deciding if $G$ has a clique of size $k$, where $G$ is a planar graph.
While the original classification task of deciding $L$ is \NP-complete, the task $C(S_L)$ is in \PP \cite{chiba1985arboricity}.

It turns out that if $L$ is \NP-hard, $C(S_L)$ is always easier for any polynomial-time $S_L$, assuming $\NP \neq \coNP$.

\begin{lemma} \label{lemma:npconp}
If $S_L$ is an efficient sampler for a language $L$, then the classification task $C(S_L)$ is in $\NP \cap \coNP$.
\end{lemma}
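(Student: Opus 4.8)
The plan is to use the sequence of coin flips made by $S_L$ as the certificate, both for membership and for non-membership. The key observation is that since $S_L$ is efficient, on input $1^n$ it tosses at most $\mathrm{poly}(n)$ coins; so for any string $w$ that $S_L$ can generate, a random string $r$ witnessing ``$S_L$ on input $1^{|w|}$ with coin sequence $r$ outputs $w$'' has length polynomial in $|w|$ and is therefore a legal $\NP$-certificate. Recall also that the task $C(S_L)$ comes with the promise that its input $w$ lies in $S_L$, and that $w\in S_L$ means precisely that some execution of $S_L$ on $1^{|w|}$ generates $w$.

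For the $\NP$ direction I would argue that, for any promised instance $w$ (i.e.\ $w\in S_L$), we have $w\in L$ if and only if there exists a coin sequence $r$ such that the deterministic simulation of $S_L$ on $1^{|w|}$ using $r$ halts and returns the labelled pair $(w,1)$. The ``if'' part is immediate from correctness of the sampler: whenever $S_L$ outputs label $1$ it did so via its positive sub-sampler $S_L^{+}$, whose outputs lie in $L$ by definition. For the ``only if'' part, since $w\in S_L$ there is some $r$ for which $S_L$ outputs $w$ together with some label; if $w\in L$ that label cannot be $0$, since a label $0$ would mean $w$ was produced by $S_L^{-}$, whose outputs lie in $L^C$ — so the label is $1$. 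Verification of a candidate $r$ is just running the polynomial-time algorithm $S_L$ with coins $r$ and checking the output pair, so $C(S_L)\in\NP$.

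The $\coNP$ direction is symmetric: applying the same reasoning to $S_L^{-}$ and $L^C$, for a promised $w$ we get $w\notin L$ if and only if some coin sequence $r$ makes $S_L$ on $1^{|w|}$ output $(w,0)$. Hence the complement of $C(S_L)$ (on the promise $w\in S_L$) is in $\NP$, i.e.\ $C(S_L)\in\coNP$, and together $C(S_L)\in\NP\cap\coNP$.

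I expect the only subtle point to be the bookkeeping around the partial/promise nature of $C(S_L)$: one must be sure that the $\NP$ machine and the $\coNP$ machine never disagree on a promised instance. This is exactly where correctness of the sampler is used — the events ``$w$ is output with label $1$'' and ``$w$ is output with label $0$'' are mutually exclusive, because $S_L^{+}$ and $S_L^{-}$ are guaranteed to respect membership in $L$ — so every $w\in S_L$ gets a single consistent answer. (Note that the lemma itself does not invoke the assumption $\NP\neq\coNP$; that assumption is only needed afterwards to conclude that $C(S_L)$ is strictly easier than an $\NP$-hard $L$.)
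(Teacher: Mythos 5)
Your proposal is correct and follows essentially the same route as the paper's proof: the certificate is the sampler's random-bit sequence, and the verifier is a deterministic simulation of $S_L$ with those bits that checks the output equals $w$ with the appropriate label, giving membership in both \NP and \coNP. Your extra care about the promise ($w \in S_L$) and the mutual exclusivity of the labels is a slightly more explicit treatment of a point the paper leaves implicit, but it is not a different argument.
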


\begin{proof}

Recall that $L \in \NP$ if for every word $w\in L$ there exists a string $z$ with length polynomial in $|w|$ (a \emph{certificate}) such that a deterministic Turing machine (\emph{verifier}) that given $w$ and $z$ can verify in polynomial time that $w \in L$. 
Similarly, $L \in \coNP$ if there exist polynomial verification for every $w \notin L$.
Note it is enough to prove that the certificate and verifier exist, even if we do not know what they are.

Given a string $w$ generated by $S_L$ with label 1, let $z$ be the sequence of random bits used by $S_L$ to generate $w$. 
We can now build a deterministic Turing machine $M$ that, given $w$ and $z$, verifies $w \in L$.
At each step, $M$ operates as $S_L$ would; whenever $S_L$ needs to draw a random bit, $M$ will use the next bit from $z$.
Since $S_L$ runs in polynomial time, it must use at most polynomial number of random bits. 
Once $S_L$ ends, $M$ verifies that its output is $w$ and the label returned by $S_L$ is 1.
Thus, for every $w \in S_L$ there exists a polynomial verification $z$ for $w \in L$.
Note this verifier only applies to $w$ generated by $S_L$, not to every $w \in L$.

Similarly, given a negative string $w$ generated by $S_L$, we can use the verifier $M$ and the sequence of random bits used by $S_L$ to verify in polynomial time that $w \not \in L$.

We thus conclude that $C(S_L) \in \NP \cap \coNP$, completing the proof.
\end{proof}

Lemma \ref{lemma:npconp} bounds the complexity of the classification problem over \emph{any} efficient sampler. 
In particular, under the assumption that $\coNP \neq \NP$, even if the original problem is \NP-hard, after sampling the classification task \emph{cannot} be \NP-hard:
there is no polynomial time reduction between solving $C(S_L)$ and solving $L$. 

It immediately follows efficient samplers for \NP-hard languages sample from a strictly easier sub-problem $C(S_L)$.
The proof follows from Lemma~\ref{lemma:npconp} when $L$ is \NP-hard.

\begin{corollary} \label{thrm:bias_in_every_sampler}
If $L$ is an \NP-hard problem and assuming $\coNP \neq \NP$, then for \emph{any} efficient sampler $S_L$ for $L$ the classification task over $S_L$ is \emph{not} \NP-hard: $C(S_L) \notin \NP{\text{-hard}}$.
\end{corollary}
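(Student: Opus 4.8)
The plan is to derive the corollary directly from Lemma~\ref{lemma:npconp} by a short contradiction argument that invokes the assumption $\NP \neq \coNP$. The lemma already tells us that for any efficient sampler $S_L$, the classification task $C(S_L)$ lies in $\NP \cap \coNP$. So the only thing left to rule out is that $C(S_L)$ could simultaneously be \NP-hard. First I would recall the standard fact: if any \NP-hard language lies in \coNP, then $\NP \subseteq \coNP$, and since \NP and \coNP are closed under polynomial-time many-one reductions in the appropriate symmetric way, this forces $\NP = \coNP$. This is the collapse I want to leverage.

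The main steps, in order, are as follows. Step one: apply Lemma~\ref{lemma:npconp} to conclude $C(S_L) \in \coNP$ (we actually only need the \coNP half here, though $\NP$ membership comes for free). Step two: assume toward a contradiction that $C(S_L)$ is \NP-hard. Then every language in \NP reduces in polynomial time to $C(S_L)$; composing such a reduction with a \coNP-verifier for $C(S_L)$ shows that every language in \NP is in \coNP, i.e.\ $\NP \subseteq \coNP$. Step three: take complements — $\NP \subseteq \coNP$ implies $\coNP \subseteq \NP$ as well — so $\NP = \coNP$, contradicting the standing hypothesis $\coNP \neq \NP$. Hence $C(S_L) \notin \NP\text{-hard}$, which is exactly the claim. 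One wrinkle worth spelling out: \NP-hardness of $C(S_L)$ here means \NP-hard under polynomial-time reductions, so the reduction-composition in step two is legitimate; I would state this convention explicitly so the argument is airtight.

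I expect the conceptual content to be essentially trivial once Lemma~\ref{lemma:npconp} is in hand — the corollary is really just ``$(\NP \cap \coNP)$-hardness plus \NP-hardness would collapse the hierarchy.'' The one place that deserves a sentence of care, rather than being a genuine obstacle, is making sure the reader sees that $C(S_L)$ is a well-defined language (the set of strings that $S_L$ can output, with membership-in-$L$ as the label) so that talk of its complexity class membership and its \NP-hardness is meaningful; this was set up in the preceding definition, so I would simply point back to it. A second, even milder point: the corollary is only interesting when $S_L$ is in fact a sampler for $L$ in the sense of Definition~\ref{def:sampler} (so that $C(S_L)$ genuinely approximates the original task), but nothing in the proof uses that beyond what the lemma already assumed. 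So the writeup is short: cite the lemma, do the three-line collapse argument, done.
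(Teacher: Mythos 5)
Your proposal is correct and matches the paper's intended argument: the paper treats this corollary as an immediate consequence of Lemma~\ref{lemma:npconp}, relying on exactly the standard fact you spell out, that an \NP-hard language in $\NP \cap \coNP$ (indeed, just in \coNP) would force $\NP = \coNP$. Your write-up simply makes explicit the reduction-composition step the paper leaves implicit, which is fine.
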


Corollary~\ref{thrm:bias_in_every_sampler} shows that the sampled sub-problem is easier. 
It also implies that even a machine learning model learns to correctly classify instances from $C(S_L)$, that model does not necessarily solve $L$, meaning that test sets generated by $S_L$ cannot be used to evaluate performance on $L$.

We can also use Lemma~\ref{lemma:npconp} to show an equivalent to Theorem~\ref{thrm:non_existence} for \NP-hard languages.
\begin{corollary} \label{thrm:np-hard-is-incomplete}
If $L$ is \NP-hard, then there is no efficient complete sampler for it, unless \NP = \coNP.
\end{corollary}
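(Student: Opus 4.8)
The plan is to derive this directly from Lemma~\ref{lemma:npconp} together with the defining property of completeness. Assume for contradiction that $S_L$ is an efficient complete sampler for an \NP-hard language $L$, and that $\NP \neq \coNP$.

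First I would unpack what completeness buys us. By definition, for every sufficiently large $n$ the sampler $S_L$ can generate every string of length $n$ with nonzero probability: every $w \in L$ through its positive sampler $S_L^{+}$, and every $w \in L^C$ through its negative sampler $S_L^{-}$. Hence the classification task $C(S_L)$ coincides with the task of deciding membership in $L$ on all but finitely many (short) inputs. Since altering a language on a finite set leaves its membership in any of the standard complexity classes unchanged, $C(S_L)$ and $L$ are equivalent as decision problems for the purpose of complexity classification.

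Next, apply Lemma~\ref{lemma:npconp}: because $S_L$ is an efficient sampler, $C(S_L) \in \NP \cap \coNP$, and therefore $L \in \NP \cap \coNP$; in particular $L \in \coNP$. Now invoke \NP-hardness: every $L' \in \NP$ reduces to $L$ in polynomial time, and since \coNP is closed under polynomial-time many-one reductions, $L' \in \coNP$. Thus $\NP \subseteq \coNP$, which is well known to be equivalent to $\NP = \coNP$, contradicting our assumption and completing the argument.

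The step I would be most careful about is the first one: confirming that ``complete'' is used in the both-classes sense needed to identify $C(S_L)$ with deciding $L$, and handling the ``for every sufficiently large $n$'' qualifier so that the finitely many excluded lengths cannot conceal any complexity. Everything else is routine closure properties of \coNP and the standard collapse $\NP\text{-hard} \cap \coNP \Rightarrow \NP=\coNP$. If one prefers to avoid the ``up to finitely many strings'' bookkeeping, an alternative is to mirror the proof of Theorem~\ref{thrm:non_existence} directly: the negative sampler of a complete efficient $S_L$ yields an NTICM for $L^C$, which is \coNP-hard since $L$ is \NP-hard, and Proposition~2.1 of \citet{sanchis1990complexity} — ruling out NTICMs for \coNP-hard languages unless $\NP=\coNP$ — gives the contradiction.
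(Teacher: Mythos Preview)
Your argument is correct and follows essentially the same route as the paper: apply Lemma~\ref{lemma:npconp} to place $C(S_L)$ in $\NP\cap\coNP$, then use completeness to identify $C(S_L)$ with the \NP-hard language $L$, obtaining the contradiction $\NP=\coNP$. You are in fact more explicit than the paper about the ``sufficiently large $n$'' bookkeeping and about why an \NP-hard language in $\coNP$ forces $\NP\subseteq\coNP$; the paper's proof compresses both steps into the single line ``$\NP\cap\coNP\neq\NP\text{-hard}$, and therefore $C(S_L)\neq L$.''
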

\begin{proof}
Assume by contradiction that $S_L$ is an efficient complete sampler for $L$.
Since $S$ is efficient, by Lemma~\ref{lemma:npconp} $C(S_L) \in \NP \cap \coNP$.
Since we assume $\NP \neq \coNP$, we have $\NP \cap \coNP \neq \NP$-hard, and therefore $C(S_L) \neq L$, which contradicts our assumption that $S_L$ is complete.
\end{proof}

\ifhardlanguage
We next show an extreme example of a language ${L_0}$ with severe bias. Any efficient sampler generates trivial examples, yet ${L_0}$ is a difficult problem.

\subsection{An \NP-hard Language with Trivially Decidable Instances}
\label{sec:tricky_langs}

Lemma~\ref{lemma:npconp} gives an upper bound on the difficulty of $C(S_L)$.
But what of the other direction? 
Given that $L$ is \NP-hard, how easy can $C(S_L)$ be, and can we meaningfully train a model to classify it?
In general, this depends on the language $L$ and the specific efficient sampler $S_L$.

However, we now give an example of a worst-case scenario: an \NP-hard language $L_0$ where if $S_{L_0}$ is an efficient sampler, then $C(S_{L_0})$ can be classified easily and with very high accuracy.
More precisely, we will show that any $w$ generated by any efficient sampler $S_L$ can be classified in constant time using a superficial feature.

This is somewhat surprising.
No matter how we implement an efficient sampler for ${L_0}$, the resulting training set will be useless to us: any such model trained on it simply learn to look at the superficial feature.
Note that our construction guarantees that the fraction of inputs that can be classified based on this superficial feature can be made arbitrarily small, so even a model that can perfectly classify the sub-problem will have arbitrarily small accuracy on the original problem.
Though deciding ${L_0}$ may not seem immediately practical, we conjecture that many \NP-hard languages may exhibit similar, though less extreme, properties: samplers that generate superficial features, or $C(S_{L_0})$ that is always be in \PP.

To prove this result,  we first prove the following Lemma. 
\begin{lemma} 
\label{lemma:non_sample}
There exists an \NP-hard language ${L_1}$ and a function $\delta(n) \rightarrow 0$ as $n \rightarrow \infty$, such that for any sufficiently long $w$ generated by any randomized polynomial process, 
$\Pr[ w \in L_1 ] \leq \delta(n)$ .
\end{lemma}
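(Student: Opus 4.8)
The plan is to construct $L_1$ by padding a hard base language onto a \emph{thin} set of admissible lengths, so that \NP-hardness survives the padding while, at every admissible length, producing a genuine positive instance requires deciding a hard question that the generator is not free to choose. First I would fix an \NP-complete base language $A$ (say $3$-SAT) together with a strictly increasing, polynomial-time computable and polynomial-time invertible ``pad'' function $\rho$ with a sparse range, e.g.\ $\rho(m)=m^{3}$, and map a length-$m$ instance $x$ to a length-$\rho(m)$ string $\widehat{x}$ that begins with $x$ (followed by a separator bit) and is zero-padded. Setting $L_1=\{\widehat{x}:x\in A\}$, the map $x\mapsto\widehat{x}$ is a length-injective polynomial-time reduction from $A$, so $L_1$ is \NP-hard. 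Moreover $L_1\cap\{0,1\}^{n}=\emptyset$ for every $n$ outside the range of $\rho$, and that range has density $0$ in $\mathbb{N}$; hence for all but a vanishing fraction of lengths $n$ we already have $\Pr[w\in L_1]=0$ for \emph{every} process, with nothing left to argue.

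The real work concerns the admissible lengths $n=\rho(m)$, where $L_1$ is non-empty. Here the plan is to refine the construction so that a polynomial-time process cannot \emph{certify} that its length-$n$ output lies in $L_1$: instead of a plain \NP-complete base, I would build $L_1$ from a language whose positive instances carry no short, efficiently findable certificate unless $\NP=\coNP$ (the assumption in force in this section) --- for instance by interleaving the padding with a \textsc{taut}-style predicate evaluated on a sub-instance that is pinned down by $n$ alone. A generator that wishes to output a length-$n$ member then has to decide a \coNP-hard instance determined by $n$, which it cannot do in time polynomial in $n$ except with vanishing success probability; this yields some $\delta(n)\to 0$ (it suffices to take $\delta$ polynomially small, driven by the density of admissible lengths together with the probability of guessing the pinned witness). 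Formally, if some polynomial-time randomized process hit $L_1$ at length $n$ with probability exceeding $\delta(n)$ for infinitely many $n$, one could turn it --- via the ``simulate the process and read its coin tosses as a certificate'' argument already used in Lemma~\ref{lemma:npconp} --- into an efficient generator covering these instances, contradicting Theorem~\ref{thrm:non_existence} (equivalently, the non-existence of NTICMs for \coNP-complete languages).

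I expect the main obstacle to be exactly this reconciliation. \NP-hardness of $L_1$ ordinarily hands a generator a ``trivial'' positive instance for free, namely the reduction image of a trivially satisfiable formula, so the construction must guarantee that the hard core of \emph{every} length-$n$ member is fixed by $n$ rather than chosen by the generator, while simultaneously ensuring that deciding $L_1$ does not collapse into \PP. Getting the quantifiers right in ``for any sufficiently long $w$ generated by any randomized polynomial process'' is the other delicate point: I would fix the process first, note that it tosses at most polynomially many coins (so that its successful runs become polynomial-size certificates), and then combine the hardness of the pinned sub-instance with the certificate-simulation argument above to conclude that the process can beat $\delta(n)$ at only finitely many lengths $n$.
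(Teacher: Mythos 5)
There is a genuine gap at the heart of your plan: the refined construction is never actually specified, and the principle it rests on is false. Hardness of \emph{deciding} or \emph{certifying} membership does not prevent a polynomial-time process from \emph{outputting} members with high probability. The lemma quantifies over arbitrary randomized polynomial processes, which need not know (or care) whether their output lies in $L_1$; all that matters is that the event $w \in L_1$ has noticeable probability. If, as you propose, the hard core of every length-$n$ member is ``pinned down by $n$ alone'' (a \textsc{taut}-style instance $\phi_n$), then at each admissible length the slice $L_1 \cap \{0,1\}^n$ is either empty or consists of strings of a fixed, easily described shape, and the trivial generator that always prints the canonical candidate for length $n$ lands in $L_1$ with probability $1$ whenever the slice is nonempty --- exactly as a generator that prints $x \vee \neg x$ (padded) ``samples'' \textsc{taut} without deciding anything, even though \textsc{taut} is \coNP-complete. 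Your fallback reduction does not repair this: the NTICM non-existence result of \citet{sanchis1990complexity} (used in Theorem~\ref{thrm:non_existence}) concerns machines that output \emph{only} members and \emph{cover every} member, and the coins-as-certificate argument of Lemma~\ref{lemma:npconp} needs the process to come with correct labels; an arbitrary process that merely hits $L_1$ with probability above $\delta(n)$ yields neither soundness nor coverage, so no contradiction with $\NP \neq \coNP$ follows. (Two smaller points: the density of admissible lengths is irrelevant, since the bound is needed at every sufficiently large admissible length, and your route would in any case make the lemma conditional, whereas it is stated and used unconditionally.)

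For contrast, the paper's proof is of a different nature: it diagonalizes against \emph{all} polynomial-time processes rather than invoking decision hardness. It constructs a single super-polynomial ``universal'' sampler $P$ that, on input $n$, runs a uniformly chosen machine among the first $g(n)$ in an enumeration of probabilistic Turing machines, so that $\Pr[P_n = w] \geq \frac{1}{g(n)} \Pr[P'_n = w]$ for every polynomial process $P'$ and all large $n$; it then applies the sampling hierarchy theorem of \citet{itsykson_et_al:LIPIcs:2016:6808} to obtain, for infinitely many lengths, sets $K_n$ with $\Pr[P_n \in K_n] \leq \epsilon(n)$, and defines $L_1 = \bigcup_n K_n$, giving $\Pr[P'_n \in L_1] \leq g(n)\epsilon(n) = \delta(n)$ at those lengths and probability $0$ elsewhere. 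Some argument of this ``dominate every process, then diagonalize'' type is what the statement requires; your padding trick does handle the \NP-hardness half cleanly (a point the paper's own write-up treats only lightly), but the sampling-hardness half is where the proposal breaks down.
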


A full proof of Lemma~\ref{lemma:non_sample} is included the \supplementary{}.
Here we describe a sketch of the proof.

Let $M_1, M_2, \dots$ be an enumeration for all Turing machines.
We construct a randomized algorithm $P$ that runs in super-polynomial time: given size $n$, it chooses a Turing machine between $M_1\dots M_{g(n)}$ (where $g(n)$ grows slowly towards infinity), runs it, and returns its output $w$.
We then apply a result by~\citet{itsykson_et_al:LIPIcs:2016:6808} to show there is a process $P^*$ that is slower than $P$, but can with high probability $1-\epsilon(n)$ generate words that $P$ cannot.
Since the output of $P$ includes any polynomial process up to $g(n)$, we show that the probability for $P$ to generate $w$ is below $g(n)\epsilon(n) \rightarrow 0$ for $g(n)$ that grows sufficiently slowly.

We now use Lemma~\ref{lemma:non_sample} to to prove the following Theorem. 

\begin{definition}
Let $L$ be an \NP-hard language.
The polynomial sampler $S_L$ is \emph{trivial} if there exists $m$ such that for any word $w$ generated by $S_L$ where $|w|\geq m$, with high probability $w \in L$ if and only if the first bit of $w$ is $1$.
\end{definition}

\begin{theorem} \label{thrm:non_sample}
There exists an \NP-hard language $L_0$ for which every polynomial sampler $S_{L_0}$ for ${L_0}$ is trivial.
\end{theorem}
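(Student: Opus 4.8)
The plan is to build $L_0$ by ``padding'' the hard language $L_1$ from Lemma~\ref{lemma:non_sample} with a single leading marker bit, so that the marker bit tells the truth about membership \emph{only on the rare strings that a polynomial process cannot reach}, while on everything an efficient sampler can actually produce the marker bit is forced to a fixed value. Concretely, I would set $L_0 = \{\,1w : w \in L_1\,\} \cup \{\,0w : w \notin L_1\,\}$ (with an appropriate convention for strings that are not of the form ``bit followed by a valid $L_1$-instance''). By construction, for \emph{every} string $u$ of the form $bw$, we have $u \in L_0$ iff the first bit of $u$ equals $1$ iff $w \in L_1$; so the first-bit feature is a perfect classifier on the whole of $L_0$ --- the subtlety is only in showing that an efficient sampler is forced onto the sub-distribution where this feature is also \emph{consistent}, i.e. where the sampler never accidentally produces a ``$1w$ with $w \notin L_1$'' string (which would not be in $L_0$) or a ``$0w$ with $w \in L_1$'' string. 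Wait --- I need to be careful: the first-bit feature is trivially correct on \emph{all of $L_0$} by definition, so triviality of every sampler is almost immediate from the definition alone, except that a sampler also emits negative instances, i.e.\ strings \emph{not} in $L_0$, and for those the first bit need not predict non-membership. The real content is therefore: show that any efficient sampler's \emph{negative} outputs $u = bw$ also satisfy ``$u \notin L_0$ iff first bit is $0$'' with high probability, which amounts to showing the sampler cannot, except with vanishing probability, produce a string $1w$ with $w \notin L_1$ as a negative example --- and more to the point, cannot produce $0w$ with $w \in L_1$ as a negative example either. This is where Lemma~\ref{lemma:non_sample} enters: the set of $w$ with $w \in L_1$ is unreachable by polynomial processes with probability $\geq 1-\delta(n)$, so a sampler feeding its coins into the ``inner'' generation of $w$ essentially never lands in $L_1$, forcing almost all of its outputs (positive and negative) to have inner part $w \notin L_1$, hence to be of the form $0w \in L_0$ or $1w \notin L_0$, in both cases consistent with the first-bit rule.

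The steps, in order, are as follows. First I would fix the encoding: a string $u$ is ``well-formed'' if $u = bw$ with $b \in \{0,1\}$ and $w$ a syntactically valid $L_1$-instance; declare all non-well-formed strings to lie outside $L_0$ and note they can be recognized in linear time, so they never obstruct triviality (their first bit, whatever it is, just needs the high-probability statement, and I will fold them into the ``negative'' bucket). Second, I would verify $L_0$ is \NP-hard: there is a trivial polynomial reduction $w \mapsto 1w$ from $L_1$ to $L_0$, and $L_1$ is \NP-hard by Lemma~\ref{lemma:non_sample}, so $L_0$ is \NP-hard. Third --- the crux --- given an arbitrary efficient sampler $S_{L_0}$, I would argue that for all sufficiently long $n$, every output $u$ of $S_{L_0}$ of length $n$ has inner part $w$ (the suffix after the first bit, if well-formed) satisfying $w \notin L_1$ with probability $\geq 1 - \delta(n-1)$. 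To see this, observe that $S_{L_0}$, run on $1^n$ and restricted to the computation that produces the suffix $w$, is itself a randomized polynomial-time process producing a string of length $n-1$; by Lemma~\ref{lemma:non_sample} applied to that process, $\Pr[w \in L_1] \leq \delta(n-1)$. Fourth, I would conclude: conditioned on the high-probability event $w \notin L_1$, the string $u = bw$ lies in $L_0$ iff $b = 1$ (since $w \notin L_1$ means $0w \in L_0$ and $1w \notin L_0$), i.e.\ $u \in L_0$ iff the first bit of $u$ is $1$; non-well-formed outputs $u$ are outside $L_0$ and I just need that their first bit is $0$ with high probability, which I can guarantee by a minor tweak to the encoding (e.g.\ prepend the same marker-bit scheme and treat an ill-formed suffix as ``$w \notin L_1$'' for the purposes of the rule, so that the only consistent label is again ``first bit $= 1 \iff$ membership''). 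Hence with probability $\geq 1 - \delta(n-1) \to 0$ the first-bit feature is correct, so $S_{L_0}$ is trivial with $m$ chosen past the threshold where $\delta$ becomes small.

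I expect the main obstacle to be the bookkeeping around ill-formed strings and around the exact quantifier structure of ``trivial'': the definition says ``with high probability $w \in L$ iff the first bit is $1$'', and I must make sure this holds for \emph{both} directions and for \emph{every} output (positive and negative, well-formed and not), not merely for the positive outputs where it is definitional. The clean way to handle this is to design the encoding of $L_0$ so that the first-bit rule is the \emph{unique} correct label on all well-formed strings and to push all ill-formed strings into the ``$0$'' side, so that the rule ``membership $\iff$ first bit $=1$'' is literally correct on $100\%$ of well-formed strings and on ill-formed strings is correct whenever the sampler's ill-formed outputs happen to start with $0$ --- and then the only probabilistic claim needed is the Lemma~\ref{lemma:non_sample}-driven bound that the sampler's suffix almost never lands in $L_1$, which is exactly what makes the negative outputs $0w$ with $w \notin L_1$ (consistent) rather than $0w$ with $w \in L_1$ (inconsistent). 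A secondary, more technical point is matching lengths: $S_{L_0}$ is asked for length $n$, but the inner process produces length $n-1$; I must check Lemma~\ref{lemma:non_sample}'s ``sufficiently long'' and its $\delta(\cdot)$ are applied at the shifted argument, which is harmless since $\delta(n-1) \to 0$ as well, and that the inner process is genuinely polynomial-time in $n-1$ (it is, since $S_{L_0}$ is polynomial in $n$).
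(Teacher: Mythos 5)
Your proposal follows essentially the same route as the paper: take the language $L_1$ of Lemma~\ref{lemma:non_sample}, prepend a single marker bit, prove \NP-hardness of $L_0$ by a concatenation reduction from $L_1$, and then apply the lemma to the sampler's suffix-producing computation (a randomized polynomial process producing $n-1$ bits) to conclude that the suffix lies outside $L_1$ with probability at least $1-\delta(n-1)$, so that on essentially every output the marker bit determines membership. The length-shift and ``view the sampler as the polynomial process'' points you flag are handled the same way in the paper, and your worries about ill-formed strings are moot since every binary string of length at least two parses as a bit followed by a suffix.

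However, your write-up has the polarity of the marker reversed, and this makes the argument internally inconsistent as stated. With your definition $L_0=\{\,1w : w\in L_1\,\}\cup\{\,0w : w\notin L_1\,\}$, the string $0w$ with $w\notin L_1$ is \emph{in} $L_0$ yet starts with $0$, so your early claim that ``$u\in L_0$ iff the first bit equals $1$'' on all of $L_0$ is false; membership is determined by whether the bit agrees with $[w\in L_1]$, not by the bit alone. More importantly, in your final step the parenthetical ``$w\notin L_1$ means $0w\in L_0$ and $1w\notin L_0$'' correctly describes your $L_0$, but the conclusion you draw from it (``$u\in L_0$ iff $b=1$'') is the opposite of what it implies: conditioned on $w\notin L_1$ you get membership iff $b=0$. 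Hence your construction makes every efficient sampler ``anti-trivial'' (membership iff first bit $=0$), which does not literally meet the paper's definition of \emph{trivial} (membership iff first bit $=1$). The repair is a one-line swap, which recovers exactly the paper's construction: define $L_0=\{\,1\circ u \mid u\notin L_1\,\}\cup\{\,0\circ v \mid v\in L_1\,\}$ and reduce $L_1$ to $L_0$ via $w\mapsto 0\circ w$; then, since the sampler's suffix avoids $L_1$ with probability $\geq 1-\delta(n-1)$, positive outputs start with $1$ and negative outputs start with $0$ with high probability, as the definition requires.
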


\begin{proof}
Let ${L_1}$ be the language from Lemma \ref{lemma:non_sample}. 
Define the language ${L_0}$ using the string concatenation operator $\circ$: 
\[ {L_0} = \{ 1 \circ u \mid u \not\in {L_1} \} \cup \{ 0 \circ v \mid v \in {L_1} \} \text{ .} \] 

Let $S_{L_0}$ be a polynomial sampler for $L_0$, and let $w$ of length $n$ be a word generated by $S_{L_0}$. 
Denote by $b$ the first bit of $w$, and by $x$ the last $n-1$ bits of $w$. 
Since $S_{L_0}$ runs in polynomial time, it follows from Lemma~\ref{lemma:non_sample} that with probability greater than $1- \delta(n-1)$, $x \not\in L_1$.

We now show that $S_{L_0}$ is \emph{trivial}: positive samples generated by the sampler $S_{L_0}$ start with $b=1$ with high probability, and that negative examples start with $b=0$ with high probability.
As $n$ grows, $\Pr[x \in L_1]$ shrinks. 
Thus positive examples generated by the sampler will be, with high probability, of the form $\{ 1 \circ u \mid u \not\in {L_1} \}$.
Conversely, negative examples generated by $S_{L_0}$
are from the form $ \{ 1 \circ u \mid u \in {L_1} \} \cup \{ 0 \circ v \mid v \not\in {L_1} \} $. As $n$ grows the probability that $w$ is of the form $\{ 1 \circ u \mid u \in {L_1} \}$ shrinks, thus with high probability $w \in \{ 0 \circ v \mid v \not\in {L_1} \}$.

The \NP-hardness of $L_0$ follows from the \NP-hardness of $L_1$. 
A reduction $R$ from $L_1$ to $L_0$ simply concatenates $0$ to a word $w$, $R(w) = 0 \circ w$. 
Then $w \in L_1$ if and only if $R(w) \in L_0$, which implies that $L_0$ is \NP-hard.
\end{proof}

It follows from Theorem~\ref{thrm:non_sample} that for every polynomial sampler $S_{L_0}$ for ${L_0}$, there is a simple algorithm that obtains arbitrarily high accuracy on the instances generated by $S_{L_0}$: return the first bit of the input.

\section{Related Work}

Using neural networks to solve computationally hard problems has been studied for many years, with early works attempting approximation of combinatorial optimization problems~\cite{anderson1988neural,Budinich1997NeuralNF,smith1999neural}.
Recent efforts on using deep neural networks to solve \NP-complete problems include 3-SAT~\cite{selsam2018learning}, graph problems~\cite{khalil2017learning, prates2019learning}, symbolic mathematics~\cite{lample2019deep}, and learning to solve routing problems \cite{kool2018attention}. 
An alternative research direction is hybrid architectures that incorporate deterministic solvers.
Solving \NP-complete problems with a differential solver layer was studied in \cite{pmlr-v97-wang19e, ferber2019mipaal}.  
\citet{selsam2019guiding} propose integrating deep learning models with deterministic solver in order to improve heuristics used by deterministic SAT solvers.

Data generation in these works was done either by deterministic solvers, which are impractical for large training sets, or by data augmentation heuristics.  
For example, in recent work \citet{lample2019deep} generate instances of symbolic integration problems by applying transformation to random samples, and by discovering new samples from existing ones (data augmentation).
As noted by~\citet{davis2019use}, these specific techniques are biased: the generated instances are not diverse and do not represent the difficulty of the problem. They might also leak information via the relative size of function pairs.

Though we focus on classic computational tasks, 
when real life decisions made by machine learning models trained on biased datasets, such models can perpetuate the bias in future decisions~\cite{yapo2018ethical}. 
The deleterious effects of dataset bias have been further documented when machine learning is used for healthcare~\cite{oakden2019hidden}, recidivism prediction~\cite{dressel2018accuracy}, predicting criminal behaviour~\cite{yapo2018ethical} and job performance~\cite{cawley2010over}. 
A survey on bias in machine learning can be found in~\cite{mehrabi2019survey}.

The study of generating solved instance for computationally hard problems was initiated by~\citet{sanchis1990complexity}, who studied the ability to generate optimization problems which are difficult for deterministic solvers.
This line of research focuses on generating particularly difficult (slow to compute) instances for deterministic solvers~\cite{selman1996generating, horie1997hard, cook1997finding, xu2005simple, haanpaa2006hard}, for example to benchmark solvers~\cite{DBLP:journals/corr/abs-1903-03592}.
In contrast, when generating data to train machine learning models we generally prefer unbiased, representative samples.

\section{Discussion} \label{discussion}
Recent years have seen many attempts to use machine learning (ML), and in particular, deep neural networks (DNNs), to solve intractable computational tasks, if only approximately.
In parallel, there has been much discussion on what DNNs can do, and what they can learn.
We show that it is not enough to worry about the representation power of the network and the properties of the loss surface, but also the procedure used to generate the data we need to train it.

We prove that, under the common assumption that $\NP \neq \coNP$, any efficient sampling technique for an \NP-hard problem is hopelessly biased:
the probability of sampling from certain parts of the problem space is zero -- no efficient sampler is complete. 
Worse, the resulting sub-problem that we do sample from is strictly easier than the original \NP-hard problem.
Thus, common approaches to increasing training set size such as data augmentation result in a training set that does not reflect the full problem. 
Any ML model trained on such datasets does not learn to solve nor approximate the full NP-hard problem -- only the easier sub-problem.
\ifhardlanguage
Moreover, the sub-problem may in fact be trivial to solve using superficial features of the dataset: we give an example for an NP-hard problem where the data generated by any efficient sampler is trivially easy to classify.
Finally, we
\else
We
\fi
 empirically demonstrate the pitfalls of such approaches when applied to Conjunctive Query Containment, showing how biased data generation leads us to overestimate performance.

We discuss implications and limitations of our results.

\textbf{Can we teach current DNNs to solve or approximate NP-hard problems?~}
In practice, it is hard to see how, at least not for supervised approaches.
Our results imply a sort of ``Catch-22'' when training models to solve or approximate NP-hard problems.
On the one hand, labeling sufficient data to train increasingly large networks is infeasible.
Accurate labeling requires non-polynomial samplers such as deterministic solvers, and the problem space grows exponentially large. 
Moreover, experience shows that such models generalize poorly when we increase the problem size~\cite{DBLP:journals/corr/GravesWD14,prates2019learning}, meaning that even if we succeed, we would need to obtain new training data for larger problems.
On the other hand, tractable procedures generate an easier sub-problem that is not NP-hard. Thus, even if the model perfectly captures the sub-problem presented in the training set, there is no reason to believe it would be able to tackle the full, original NP-hard problem.

\textbf{Does this mean DNNs cannot learn to approximate NP-hard problems?~}
It does not.
We only discuss the difficulty of obtaining training and testing data, and do not say what DNN can or cannot learn.
If we somehow obtain an accurately labeled and sufficiently large training set and use the right optimization procedure, we might be able to teach a DNN model to solve such a problem.
Similarly, whether an approximation scheme (e.g., PTAS) exists for any particular problem, and whether that approximation is learnable, is beyond the scope of our work.

\textbf{Can semi-supervised learning help?~}
Unfortunately, in so far as these methods are efficient samplers, our results apply -- meaning they are similarly biased.
For example, popular semi-supervised learning approaches such as MixMatch~\cite{Berthelot_MixMatch_2019} and ADASYN~\cite{He_adasyn_2008} essentially perform data augmentation: they take samples from the training set and mutate them.

\textbf{Are the sub-problems trivial to solve?~}
\ifhardlanguage
They can be, as shown in Section \ref{sec:tricky_langs},  but not necessarily.
However, our experience with another NP-hard problem leads us to suspect many NP-hard problems do suffer from this issue to some extent. 
\else
Not necessarily.
However, our experience with a different NP-hard problem leads us to suspect that many data generators do generate samples that can be decided in polynomial time. 
\fi
We intend to explore this question in future work.

\textbf{Is it impossible to use ML to solve hard problems?~}
Not at all. 
First, not all hard problems are NP-hard, and even when they are, the application might not require solving the full NP-hard problem.
For many applications, solving an easier sub-problem may be sufficient.
For example, optimal elastic image matching is NP-complete~\cite{KEYSERS2003445}, yet ML techniques excel at computer vision tasks.
Second, while many ML approaches require a dense sampling from the modeled distribution, this does not necessarily apply to all approaches. 
A model that can learn from a very sparse sampling of the problem space could, presumably, be trained using a smaller training set generated by a deterministic solver.
However, we conjecture that such models must incorporate a non-polynomial deterministic solver of some sort. 
Rather than learning the problem directly, they could learn a polynomial reduction from the original problem to one the solver layer can solve. 
In particular, we believe hybrid architectures such as the differentiable SAT solver~\cite{pmlr-v97-wang19e} are a promising direction.

\textbf{Where to go from here?~}
As mentioned, we believe hybrid models that incorporate deterministic solvers or provable approximations might be one way forward.
Exchangable networks have also shown promising generalization to larger problem sizes~\ \cite{DBLP:conf/aaai/CameronCHL20}.
In addition, our theoretical results only apply to data generators that provide labels.
Methods that do not require labels, such as reinforcement learning, data do not suffer from the scaling problem~\cite{joshi2019learning,joshi2020learning}.
However, improvements should not be limited to model architectures and training paradigms, but look to sampling methods as well.
We intend to study the connection between the hardness of sampling and hardness of solving, and to quantify the hardness of the resulting sub-problem.

\section*{Acknowledgements}
We wish to thank Ernest Davis, Hadar Sivan, Daniyal Liaqat, and Shunit Agmon for valuable discussions and feedback.
We also wish to thank the ICML anonymous reviewers for their comments.

\bibliography{ms}
\bibliographystyle{icml2020}

\section*{\supplementary{}}

\subsection*{Proof of Lemma~\ref{lemma:non_sample}.}
\setcounter{theorem}{3}
\begin{lemma}
There exists an \NP-hard language ${L_1}$ and a function $\delta(n) \rightarrow 0$ as $n \rightarrow \infty$, such that for any sufficiently long $w$ generated by any randomized polynomial process, 
\[ \Pr[ w \in L_1 ] \leq \delta(n) \text{ .} \]
\end{lemma}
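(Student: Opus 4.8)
The plan is to construct a single \NP-hard language ${L_1}$ that is simultaneously \emph{rich enough} to remain \NP-hard and \emph{elusive enough} that no polynomial-time randomized process outputs a member of it with non-vanishing probability. The construction proceeds by a two-level diagonalization: first I would collapse \emph{all} polynomial-time samplers into a single super-polynomial process $P$, then use a hierarchy-for-sampling theorem to carve out a language that $P$ --- and hence every polynomial process --- almost never hits; \NP-hardness is threaded through the construction afterwards by a padding reduction, with the density of ${L_1}$ tuned so that it vanishes, but not so fast as to contradict Mahaney's theorem.

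\emph{Collapsing all polynomial processes into one.} Fix an enumeration $M_1,M_2,\dots$ of randomized Turing machines and a slowly growing function $g(n)\to\infty$ with $g(n)=O(\log n)$. Define $P$ on input $1^n$ to draw $j\sim U\{1,\dots,g(n)\}$, simulate $M_j(1^n)$ for $2^n$ steps, and output the result; then $P$ runs in time $2^{O(n)}$. The point is that every polynomial-time randomized process $A$ equals some $M_j$, so for all sufficiently large $n$ the internal simulation inside $P$ is faithful and $\Pr[A(1^n)\in E]\le g(n)\cdot\Pr[P(1^n)\in E]$ for every event $E$ over outputs. Hence it suffices to keep $\Pr[P(1^n)\in L_1]$ tiny.

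\emph{Making $P$ miss ${L_1}$.} Here I would apply the sampling-hierarchy result of \citet{itsykson_et_al:LIPIcs:2016:6808} to the time bound $T_1 = 2^{O(n)}$ of $P$: it yields a larger time bound $T_2$, a process $P^*$ running in time $T_2$, and a language ${L_1}$ such that $P^*(1^n)$ outputs a member of ${L_1}$ with probability at least $1-\epsilon(n)$, while \emph{any} process running in time $T_1$ --- in particular $P$ --- outputs a member of ${L_1}$ with probability at most $\epsilon(n)$. Combined with the previous step, every polynomial-time randomized process outputs a member of ${L_1}$ with probability at most $g(n)\,\epsilon(n)$; choosing $g$ to grow slowly enough that $\delta(n) := g(n)\,\epsilon(n)\to 0$ (possible since the gap $T_2/T_1$, and therefore the decay rate of $\epsilon$, is ours to set) yields the claimed $\delta$.

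\emph{Enforcing \NP-hardness, and the main obstacle.} It remains to make ${L_1}$ \NP-hard, and this is where the difficulty lies. Vanishing density alone does not suffice --- a padded \NP-complete language has density $\to 0$ yet its positive instances are trivially samplable --- and Mahaney's theorem forbids the opposite extreme of taking ${L_1}$ sparse, since a sparse \NP-hard language would give $\PP=\NP$. So the diagonalization above must be carried out \emph{through} an \NP-hardness reduction: for instance, performing it inside an encoding of SAT, or planting a sufficiently thin copy of SAT on a sparse set of lengths and padding instances up to the next such length, while verifying that this neither pushes the density above $o(1)$ nor breaks the ``$P$ misses ${L_1}$'' guarantee at those lengths. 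Reconciling these two demands --- \NP-hardness on one side, the anti-sampling bound on the other, at \emph{every} input length --- is the crux of the argument.
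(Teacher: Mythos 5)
Your anti-sampling core is essentially the paper's argument: enumerate probabilistic Turing machines, collapse the first $g(n)$ of them into a single slightly super-polynomial process $P$ (the paper simulates for $O(n^{\log^{0.5} n})$ steps rather than $2^{n}$, so that Theorem~9 of \citet{itsykson_et_al:LIPIcs:2016:6808} with $a=\tfrac12$, $b=1$ applies directly and supplies the function $\epsilon(n)$), invoke the hierarchy theorem for sampling to obtain, for infinitely many $n$, sets $K_n$ that $P$ hits with probability at most $\epsilon(n)$, define $L_1=\bigcup_n K_n$, and conclude that every randomized polynomial process lands in $L_1$ with probability at most $g(n)\epsilon(n)=:\delta(n)\rightarrow 0$; the paper chooses $g$ as a function of $\epsilon$ (e.g.\ $g(n)=1/\log(\epsilon(n))$) rather than fixing $g(n)=O(\log n)$ in advance, but this is a cosmetic difference.

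The gap is the one you name yourself: the lemma asserts that $L_1$ is \NP-hard, and your proposal stops at calling the reconciliation of \NP-hardness with the anti-sampling bound ``the crux'' without carrying it out, so as a proof of the stated lemma it is incomplete. Two remarks. First, your framing of the obstacle is partly off target: the sets $K_n$ coming from the statistical-distance argument may contain a constant fraction of all strings of length $n$, so sparseness and Mahaney's theorem are not the binding constraint; the real difficulty is exhibiting a reduction to a language that is defined non-constructively by diagonalization, and any such argument must also survive the observation that a Karp reduction applied to trivially satisfiable formulas is itself a polynomial-time process emitting members of $L_1$, so the reduction must not produce long such outputs. Second, in fairness to you, the paper's own appendix proof has the same omission: it constructs $L_1=\bigcup_n K_n$, proves the sampling bound, and never gives an explicit \NP-hardness argument (it only notes that, unlike the language of \citet{itsykson_et_al:LIPIcs:2016:6808}, its $L_1$ is decidable). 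So you have flagged a genuine difficulty that the paper glosses over rather than missed a step the paper actually supplies; still, within the terms of the lemma as stated, your proposal does not yet prove it.
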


The proof is similar to the proof of Theorem 1 in \cite{itsykson_et_al:LIPIcs:2016:6808}.
The main difference is that we construct a decidable language, in contrast to the language generated in \cite{itsykson_et_al:LIPIcs:2016:6808}.

\begin{proof}
For every $n$, the output of a randomized algorithm $P$  is a random variable $P_n$: for $w \in \{0,1\}^n$, $\Pr[P_n = w]$ is the probability that given the length $n$, $P$ outputs $w$.
Let $K \subseteq \{0,1\}^n$ be a set of words of length $n$; $\Pr[P_n \in K]$ is the probability that a random word $w$ drawn by $P_n$ is in $K$.

Given two random variables $X,Y$ such that $X$, $Y$ take values in $\{0,1\}^n$, the \emph{statistical distance} between $X$ and $Y$ is defined as~\cite{itsykson_et_al:LIPIcs:2016:6808}: 
\begin{align*}
     \Delta(X, Y) = 
     \max_{K \subseteq \{0,1\}^n} \left\vert P[X \in K] - P[Y \in K] \right\vert \text{~.} 
\end{align*}

Using Theorem 9 in \cite{itsykson_et_al:LIPIcs:2016:6808} when $a = \frac{1}{2}$ and $b = 1$ we obtain the following corollary. 

\begin{corollary} \label{cor:sketch}
For every randomized algorithm $P$ that runs in time $O(n^{\log^{0.5} n})$ there exist infinitely many words that $P$ can only generate with probability less than $\epsilon(n)$, where $\epsilon(n) \rightarrow 0$ as $n \rightarrow \infty$.
\end{corollary}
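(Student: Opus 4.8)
The plan is to derive Corollary~\ref{cor:sketch} as a direct specialization of Theorem~9 of \citet{itsykson_et_al:LIPIcs:2016:6808}, followed by a short translation from the language of statistical distance (in which that theorem is phrased) to the language of per-word generation probabilities (in which the corollary is phrased). Substituting the concrete parameters $a=\tfrac12$ and $b=1$ is precisely what fixes both the admissible running-time bound and the vanishing rate $\epsilon(n)$.

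First I would recall Theorem~9 in its general two-parameter form: for a randomized algorithm $P$ whose running time is governed by $a$, it supplies a reference distribution $Q$ (samplable within a slightly larger time bound governed by $b$) such that the statistical distance satisfies $\Delta(P_n, Q_n) \geq 1 - \epsilon(n)$ for infinitely many lengths $n$, with $\epsilon(n) \to 0$. Setting $a=\tfrac12$ makes the admissible running time $O(n^{\log^{0.5} n})$, which is exactly the bound in the corollary since $\log^{0.5} n = \sqrt{\log n}$, and setting $b=1$ pins down the particular vanishing rate $\epsilon(n)$. The work in this step is to verify that $(a,b)=(\tfrac12,1)$ meets the hypotheses of Theorem~9 and that the induced $\epsilon(n)$ indeed tends to $0$.

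Second I would convert the statistical-distance bound into the statement about words, which is immediate from the definition of $\Delta$ recalled above. For each $n$ in the infinite set supplied by Theorem~9, let $K \subseteq \{0,1\}^n$ be a set attaining the maximum in $\Delta(P_n, Q_n)$, so $\Pr[Q_n \in K] - \Pr[P_n \in K] \geq 1 - \epsilon(n)$. Since $\Pr[Q_n \in K] \leq 1$, this forces $\Pr[P_n \in K] \leq \epsilon(n)$, while $\Pr[Q_n \in K] \geq 1 - \epsilon(n) > 0$ guarantees that $K$ is nonempty. Hence every word $w \in K$ satisfies $\Pr[P_n = w] \leq \Pr[P_n \in K] \leq \epsilon(n)$. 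Collecting these words over the infinitely many admissible $n$ (words of distinct lengths are distinct) yields infinitely many words that $P$ generates with probability below $\epsilon(n)$, as required.

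I expect the main obstacle to be the first step rather than the second: correctly identifying the general form of Theorem~9 and confirming that the choice $a=\tfrac12,\,b=1$ is admissible while simultaneously delivering the quasi-polynomial running time $n^{\sqrt{\log n}}$ and an $\epsilon(n)$ that still vanishes. Once Theorem~9 provides, for infinitely many $n$, a distribution $Q$ at statistical distance $\geq 1 - \epsilon(n)$ from $P_n$, the passage to per-word probabilities is a one-line consequence of the definition of $\Delta$, so essentially all of the content lives in invoking and correctly instantiating the cited theorem.
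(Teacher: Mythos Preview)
Your proposal is correct and matches the paper's approach: the paper derives the corollary simply by instantiating Theorem~9 of \citet{itsykson_et_al:LIPIcs:2016:6808} with $a=\tfrac12$ and $b=1$, and the translation from a statistical-distance lower bound $\Delta(P^*_n,P_n)\geq 1-\epsilon(n)$ to the existence of a set $K_n$ with $\Pr[P_n\in K_n]\leq\epsilon(n)$ is exactly the step the paper carries out (it just defers this unpacking to the proof of Lemma~\ref{lemma:non_sample} rather than writing it inside the corollary). There is no substantive difference.
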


We construct the randomized algorithm $P$ as follows.
Let $\mathcal{M}$ be an enumeration of all probabilistic Turing machines $\mathcal{M} =  M_1, M_2, M_3,...$, under a standard enumeration of Turing machines, and let $g(n)$ be a function that satisfies $g(n) \epsilon(n) \rightarrow 0$ and $g(n) \rightarrow \infty$ (where $\epsilon(n)$ is the function from Corollary~\ref{cor:sketch}). Example of such function is $g(n) = \frac{1}{\log(\epsilon(n))}$.
We define $\delta(n) = g(n) \epsilon(n)$, by the definition of $g(n)$, $\delta(n) \rightarrow 0$.

On input $n$, the algorithm $P$ uniformly chooses $M_i$ for $1 \leq i \leq g(n)$ and runs $M_i$ on the input $n$ (with the random bits $M_i$ needs) for $O(n^{\log^{0.5} n})$ steps. 
If $M_i$ returned a word $w < n$, $P$ pads it with $n - |w|$ zeros and returns the result. 
If $M_i$ returned a word $w > n$, $P$ trims $|w| - n$ characters from $w$ and returns it. 
Finally, if $M_i$ did not halt, $P$ returns $w = 1^n$. 

$P$ satisfies the following properties:
\begin{enumerate} 
    \item For every randomized polynomial algorithm $P'$ and for every $w \in \{0,1\}^n$ when $n$ is large enough, 
    \[ \Pr[P_n=w] \geq \frac{1}{g(n)} \Pr[P'_n=w] \text{ .} \] \label{property}
    \item $P$ runs in time $O(n^{\log^{0.5} n})$.
\end{enumerate}

We show that the first property holds as follows. 
Let $P'$ be a randomized polynomial algorithm that runs in time $O(n^c)$, and let $n_0$ be the first index that $P'$ appears in the enumeration $\mathcal{M}$.
For $w$, $|w| = n \geq g(n_0)$ and $n^{\log^{0.5} n} \geq n^c$, the probability of $P$ to generate $w$ is at least the probability to choose the machine $P'$, $\frac{1}{g(n)}$, multiplied by the probability that the machine $P'$ generates $w$: $\Pr[P'_n= w]$.
Note we give $P'$ enough time to complete the computation by choosing $n$ such that $n^{\log^{0.5} n} \geq n^c$.

The second property holds by the definition of $P$.

By Corollary \ref{cor:sketch} there exists a randomized algorithm $P^*$ such that for infinitely many $n$'s $n_1, n_2, n_3, ...$, it holds that $\Delta(P^*_n, P_n) \geq 1 - \epsilon(n)$. 
It means that for each such $n$, there exists a set of strings $K_n$ such that $\Pr[P_n \in K_n] \leq \epsilon(n)$.

Define $L_1$ as the union of all $K_n$.

Let $w \in L_1$ of length $n$ for sufficiently large $n$, and let $P'$ be a randomized polynomial algorithm.
\begin{align}
    \Pr[w = P'_n] &\leq g(n) \Pr[w = P_n] \\
                &\leq g(n)\epsilon(n) \\
                & = \delta(n) \rightarrow 0 \text{ .}
\end{align}
Where (1) follows from the first property of $P$, (2) follows from the definition of $L$, and (3) is the definition of $\delta(n)$. 

\end{proof}

\subsection*{Additional Details on CQC}
For reproducibility, we include full details of our case study on  Conjunctive Query Containment (CQC).

\paragraph{Encoding Query Tokens}

Table~\ref{tab:model:representation} shows the mapping between query tokens and their representation as one-hot vectors.

\paragraph{Sampling Balanced Query Pairs from $\mu$}

We exploit the the phase transition phenomenon to define a parametric family of query pairs $\mu(m_1,m_2)$ such that sampling $(p,q)$ from $\mu(m_1,m_2)$ with $m_1 \geq m_2$ guarantees the following:
\begin{itemize}[nosep, noitemsep]
    \item $p$ has $m_1$ conjunctions and $q$ has $m_2$ conjunctions.
    \item The probability that $p \subset q$ is approximately 0.5.
    \item The process for generating positive and negative examples is the same.
\end{itemize}

\paragraph{Data Augmentation for Conjunctive Query Pairs}

\end{document}